\documentclass{article}

\usepackage[preprint]{neurips_2026}

\usepackage{amsthm}

\usepackage[utf8]{inputenc}
\usepackage[T1]{fontenc}
\usepackage{hyperref}
\usepackage{url}
\usepackage{booktabs}
\usepackage{amsfonts}
\usepackage{amsmath}
\usepackage{amssymb}
\usepackage{amsthm}
\newtheorem{proposition}{Proposition}

\usepackage{nicefrac}
\usepackage{microtype}
\usepackage[table]{xcolor}
\usepackage{graphicx}
\usepackage{subcaption}
\usepackage{algorithm}
\usepackage{algorithmic}
\usepackage{cleveref}
\usepackage{xspace}
\usepackage{multirow}
\usepackage{enumitem}
\usepackage{wrapfig}
\usepackage{listings}
\usepackage{tcolorbox}
\tcbuselibrary{listings,skins,breakable}
\usepackage{soul}
\usepackage{tikz}
\usetikzlibrary{positioning, arrows.meta, calc, backgrounds, fit}

\definecolor{kept}{RGB}{232,245,233}
\definecolor{pruned}{RGB}{255,235,238}
\definecolor{keptborder}{RGB}{76,175,80}
\definecolor{prunedborder}{RGB}{244,67,54}
\definecolor{codebg}{RGB}{250,250,250}
\definecolor{codeframe}{RGB}{200,200,200}
\definecolor{PosBlue}{RGB}{0,90,160}
\definecolor{NegBlue}{RGB}{140,170,200}
\definecolor{ctx}{RGB}{156,39,176}
\definecolor{sem}{RGB}{0,90,160}
\definecolor{dep}{RGB}{80,160,210}
\lstdefinestyle{codestyle}{
  basicstyle=\ttfamily\scriptsize,
  breaklines=true,
  columns=fullflexible,
  keepspaces=true,
  showstringspaces=false,
  tabsize=4,
  xleftmargin=2em,
  numbers=left,
  numberstyle=\tiny\color{gray},
  numbersep=6pt,
}

\newcommand{\method}{LaMR\xspace}

\title{Context Pruning for Coding Agents via Multi-Rubric Latent Reasoning}

\author{%
  Anonymous Author(s)
}

\author{%
Jingjing Wang\footnotemark[1] \\
  Clemson University \\
  \texttt{jingjiw@clemson.edu} \\
   \And
  Xiwen Chen\thanks{Equal contribution.} \\
  Morgan Stanley \\
  \texttt{xiwen.chen@morganstanley.com} \\
  \And
  Wenhui Zhu\footnotemark[1] \\
  Arizona State University \\
  \texttt{wzhu59@asu.edu} \\
  \AND
  Huayu Li   \\
 \quad  University of Arizona \\
  \quad  \texttt{hl459@arizona.edu} \\
  \And
  \quad \quad Zhengxiao He \\
   \quad  \quad \quad University of Arizona \\
   \quad \quad \texttt{zhengxiaohe@arizona.edu} \\
  \And
  \quad \quad Feiyang Cai \\
  \quad \quad  Clemson University \\
 \quad \quad  \texttt{feiyang@clemson.edu} \\
  \And
 \hspace{-2em} Ana S. Carreon-Rascon \\
  \hspace{-2em} University of Arizona \\
  \hspace{-2em} \texttt{anascarreonr@arizona.edu} \\
  \And
   \quad Xuanzhao Dong \\
  \quad  Arizona State University \\
   \quad \texttt{xdong64@asu.edu} \\
  \And
  \quad \quad Feng Luo \\
 \quad \quad Clemson University \\
  \quad \quad \texttt{luofeng@clemson.edu} \\
}

\begin{document}

\maketitle

\begin{abstract}
LLM-powered coding agents spend the majority of their token budget reading repository files, yet much of the retrieved code is irrelevant to the task at hand. Existing learned pruners compress this context with a single-objective sequence labeler, collapsing all facets of code relevance into one score and one transition matrix. We show that this formulation creates a modeling bottleneck: a single CRF transition prior must serve heterogeneous retention patterns, including contiguous semantic spans and sparse structural support lines. We propose \textbf{\method{} (Latent Multi-Rubric)}, a structured pruning framework that decomposes code relevance into two interpretable quality dimensions, semantic evidence and dependency support, each modeled by a dedicated CRF with dimension-specific transition dynamics. A mixture-of-experts gating network dynamically weights the per-rubric emissions conditioned on the query, and a final CRF layer on the fused emissions produces the aggregate keep-or-prune decision. To supervise each dimension without additional annotation cost, we derive multi-rubric labels from the existing training corpus via AST-based program analysis, simultaneously denoising the teacher's binary labels. By effectively filtering distracting noise, \method{} frequently matches or even outperforms unpruned full-context baselines. Experiments on four benchmarks (SWE-Bench Verified, SWE-QA, LCC, LongCodeQA) show that \method{} wins 12 of 16 head-to-head multi-turn comparisons. It saves up to 31\% more tokens on multi-turn agent tasks and improves Exact Match by up to +3.5 on single-turn tasks, while performance is frequently enhanced by denoising the context, and any remaining drops are marginal.
\end{abstract}

\begin{figure}[h]
\centering
\includegraphics[width=\textwidth]{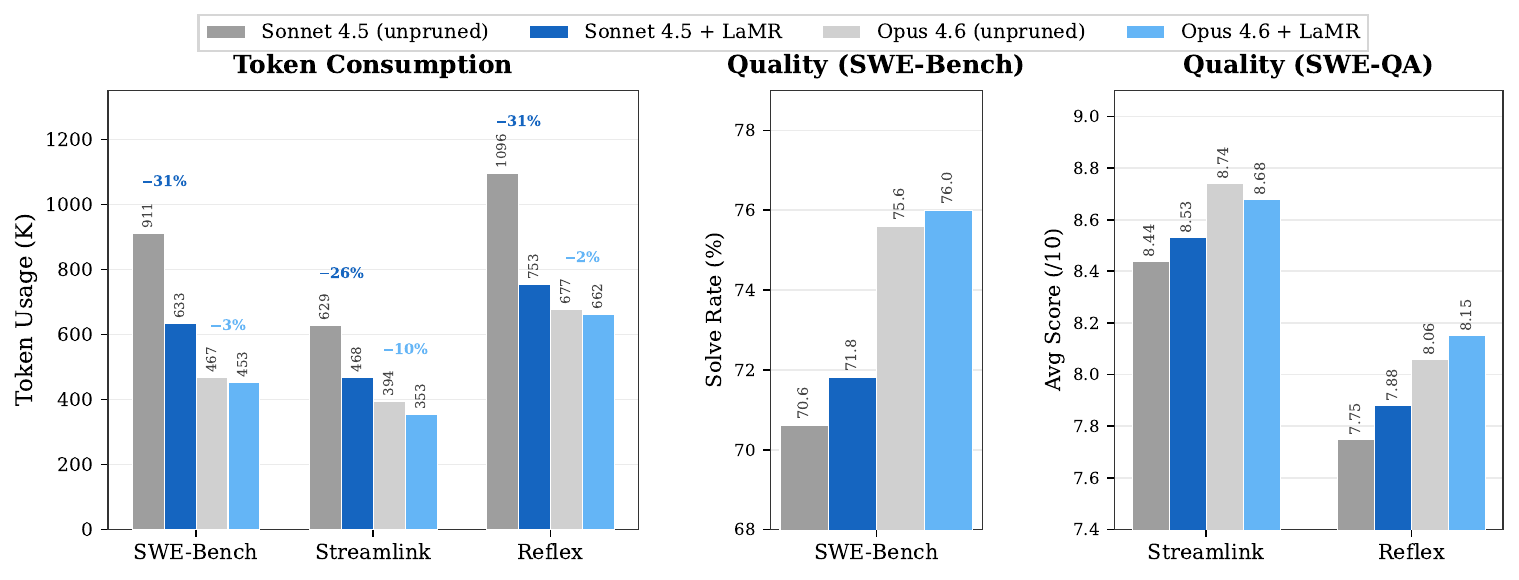}
\caption{Effect of \method across two backbone models and three benchmarks.
\textbf{Left}: Token consumption (K tokens);
percentage labels show reduction by \method relative to unpruned.
\textbf{Middle/Right}: Task quality
(SWE-Bench solve rate in \%;
SWE-QA average score on a 10-point scale).
Gray bars are unpruned baselines; blue bars are with \method.
\method consistently reduces token usage
while maintaining or slightly improving quality.}
\label{fig:main_results}
\end{figure}

\section{Introduction}
\label{sec:introduction}

LLM-powered coding agents are rapidly reshaping software engineering, with state-of-the-art (SOTA) frameworks such as SWE-agent \citep{yang2024sweagent}, OpenHands \citep{wang2024openhands}, and Agentless \citep{xia2024agentless} automating complex, repository-level tasks. Commercial success followed suit: Anthropic's Claude Code and Cursor have reached multi-billion dollar valuations by processing millions of lines of code daily. However, these agents have hit a \textit{context wall}: they spend approximately 67--76\% of their token budget simply reading files \citep{swepruner}. This context bloat not only incurs prohibitive API costs but also triggers "attention dilution" and the \textit{lost-in-the-middle} phenomenon \citep{liu2023lost}, where irrelevant code segments degrade the agent's reasoning precision.

The quest for context efficiency has evolved through three distinct paradigms. Initially,  task-agnostic heuristics were employed, such as perplexity-based token pruning \citep{jiang2023llmlingua, li2023compressing} or similarity-based RAG \citep{gao2023retrieval}. However, as noted by \citep{shi2025longcodezip}, these methods primarily target natural language and face critical limitations when applied to code: token-level pruning frequently compromises syntactic validity, while abstractive techniques discard character-level details essential for debugging. Furthermore, these approaches operate with static compression ratios and task-agnostic criteria, failing to adapt to the evolving informational needs of multi-turn agent interactions \citep{ yang2024sweagent}.

Recently, SWE-Pruner \citep{swepruner} mitigated this issue by framing code context pruning as a structured sequence labeling task. It trains a 0.6B-parameter neural skimmer by minimizing the conditional random field negative log likelihood (CRF-NLL) \citep{zheng2015conditional}. This allows adaptive token usage reduction while keeping character-level integrity. However, this single-objective formulation introduces an inherent conflict: it uses one transition matrix to model all types of code relevance. Code relevance operates across distinct dimensions. Semantic relevance usually forms continuous blocks, which require high self-transition probabilities in the CRF. In contrast, structural dependencies (like \texttt{import} statements) and control-flow pairs (like \texttt{try}/\texttt{except}) are scattered. A single transition matrix creates a modeling bottleneck: it must serve both dense semantic spans (contiguous keep-blocks) and sparse structural support (scattered single-line jumps) with one set of parameters. 
Forced to compromise, the model biases toward the denser semantic blocks. It retains the main function body but discards necessary structural lines (\cref{fig:motivation}). This missing structure severely impacts strong coding agents. When models like Claude~4.6 encounter broken syntax or missing context, they fail and repeatedly retry the task. These iterative retries end up consuming more tokens than the unpruned baseline.

To address this, we propose \method{} (\textbf{La}tent \textbf{M}ulti-\textbf{R}ubric). We decompose code relevance into \(K\) latent rubrics, each modeled by a dedicated CRF head with its own transition matrix. This allows dimension-specific sequence patterns. A query-adaptive Mixture-of-Experts (MoE) gate then combines the per-dimension CRF emissions, dynamically balancing semantic and structural signals for the current task. Since existing datasets provide only binary keep-or-prune masks, we introduce Rubric-Guided Labeling to derive fine-grained supervision without additional annotation. Using Abstract Syntax Tree (AST) analysis, we extract dimension-specific labels from the existing masks and recover structurally necessary lines that the teacher may omit, such as imports, class headers, and control-flow companions. This produces a cleaner training signal for the neural skimmer. More broadly, \method{} is not merely deleting tokens; it aims to preserve self-contained evidence-support units: task-relevant lines together with the structural context needed to interpret them. By effectively extracting signal from noise, \method{} provides agents with higher-quality observations. We empirically demonstrate that this structure-aware approach not only compresses the context but carries the potential to elevate the agent's final task performance. 

We summarize our contributions below: (i)\textbf{Latent Multi-Rubric Formulation.} We identify the transition conflict in single-objective pruning and resolve it by modeling code relevance through $K$ separate CRF heads.
  (ii) \textbf{Query-Adaptive Gating.} We use an MoE gating network to adjust the focus between semantic and structural code based on the agent's query.
  (iii) \textbf{AST-based Label Denoising.} We develop a zero-cost pipeline to extract multi-dimensional labels from binary masks, which corrects missing structural tokens in the teacher data.
  (iv) \textbf{Superior Efficiency-Performance Tradeoff.} We extensively evaluate \method across four benchmarks and two state-of-the-art LLM backbones. As a lightweight middleware, it consistently mitigates token bloat and achieves a strictly better tradeoff between context compression and downstream task quality compared to prior methods.

\begin{figure}[h]
\centering
\small
\begin{minipage}[c]{0.50\textwidth}
\begin{tcolorbox}[
  colback=codebg, colframe=codeframe,
  coltitle=black, colbacktitle=codebg,
  boxrule=0.4pt, left=0pt, right=0pt, top=1pt, bottom=1pt,
  title={\scriptsize\textbf{Query:} \emph{``How does the middleware validate JWT tokens?''}},
  fonttitle=\scriptsize,
]
\begin{lstlisting}[style=codestyle, escapechar=@, basicstyle=\ttfamily\tiny]
import jwt                           @\hfill\textcolor{dep}{\tiny$\blacktriangleleft$\,dep}@
from config import SECRET_KEY        @\hfill\textcolor{dep}{\tiny$\blacktriangleleft$\,dep}@

class AuthMiddleware:                @\hfill\textcolor{dep}{\tiny$\blacktriangleleft$\,dep}@
    def validate_token(self, token): @\hfill\textcolor{sem}{\tiny$\blacktriangleleft$\,sem}@
        try:                         @\hfill\textcolor{dep}{\tiny$\blacktriangleleft$\,dep}@
            payload = jwt.decode(    @\hfill\textcolor{sem}{\tiny$\blacktriangleleft$\,sem}@
                token, SECRET_KEY,   @\hfill\textcolor{sem}{\tiny$\blacktriangleleft$\,sem}@
                algorithms=["HS256"] @\hfill\textcolor{sem}{\tiny$\blacktriangleleft$\,sem}@
            )
            return payload           @\hfill\textcolor{sem}{\tiny$\blacktriangleleft$\,sem}@
        except ExpiredSignatureError:@\hfill\textcolor{dep}{\tiny$\blacktriangleleft$\,dep}@
            return None              @\hfill\textcolor{dep}{\tiny$\blacktriangleleft$\,dep}@
        except InvalidTokenError:    @\hfill\textcolor{dep}{\tiny$\blacktriangleleft$\,dep}@
            return None              @\hfill\textcolor{dep}{\tiny$\blacktriangleleft$\,dep}@
    def process_request(self, req):
        ...
\end{lstlisting}
\end{tcolorbox}
\end{minipage}%
\hfill
\begin{minipage}[c]{0.47\textwidth}
\scriptsize
\textbf{(a) Single-objective}\\[3pt]
\begin{tabular}{@{}rl@{}}
\colorbox{pruned}{\textcolor{prunedborder}{\texttimes}} & L1--2: prunes imports {\tiny(low semantic)} \\[1pt]
\colorbox{pruned}{\textcolor{prunedborder}{\texttimes}} & L4: prunes class header {\tiny(structural)} \\[1pt]
\colorbox{kept}{\textcolor{keptborder}{\checkmark}} & L5--11: keeps core logic {\tiny(high semantic)} \\[1pt]
\colorbox{pruned}{\textcolor{prunedborder}{\texttimes}} & L12--15: prunes \texttt{except} {\tiny(control-flow)} \\
\end{tabular}
\\[8pt]
\textbf{(b) \method{} (multi-rubric)}\\[3pt]
\begin{tabular}{@{}rl@{}}
\colorbox{kept}{\textcolor{keptborder}{\checkmark}} & L1--2: keeps imports {\tiny via \textcolor{dep}{dep}} \\[1pt]
\colorbox{kept}{\textcolor{keptborder}{\checkmark}} & L4: keeps class header {\tiny via \textcolor{dep}{dep}} \\[1pt]
\colorbox{kept}{\textcolor{keptborder}{\checkmark}} & L5--11: keeps core logic {\tiny via \textcolor{sem}{sem}} \\[1pt]
\colorbox{kept}{\textcolor{keptborder}{\checkmark}} & L12--15: keeps \texttt{except} {\tiny via \textcolor{dep}{dep}} \\
\end{tabular}
\\[6pt]
{\tiny
\textcolor{sem}{$\blacksquare$}\,semantic\quad
\textcolor{dep}{$\blacksquare$}\,dependency
}
\end{minipage}
\vspace{-2pt}
\caption{
  A single-objective pruner collapses semantic and structural
  relevance into one score,
  discarding lines that rank low on semantics alone.
  \method decomposes relevance into two rubrics
  (\textcolor{sem}{semantic} and \textcolor{dep}{dependency}),
  each with a dedicated CRF,
  and fuses them via query-adaptive gating.
}
\label{fig:motivation}
\end{figure}

\section{Preliminaries}
\label{sec:preliminaries}

\noindent\textbf{Code context pruning.}\quad
Given a code context $C = (x_1, \dots, x_n)$ of $n$ tokens
and a natural-language query $q$ describing the agent's current information need,
the task is to produce a binary label sequence
$\mathbf{y} \in \{\texttt{keep}, \texttt{prune}\}^n$
such that the retained tokens preserve query-relevant information
while minimizing context length.
Token-level scores are aggregated to line granularity
and thresholded to produce the final pruned output.

\noindent\textbf{From heuristic to learned pruning.}\quad
Prior approaches rely on task-agnostic heuristics:
LLMLingua~\citep{jiang2023llmlingua} and
Selective-Context~\citep{li2023compressing}
prune tokens based on perplexity or self-information,
while RAG-based methods retrieve code chunks
via embedding similarity~\citep{guo2022unixcoder}.
These methods use surface-level statistics
without understanding code structure or the agent's intent.
SWE-Pruner~\citep{swepruner} takes a fundamentally different approach:
it trains a lightweight neural skimmer
that \emph{learns} query-conditioned pruning
as structured sequence labeling with a linear-chain
CRF~\citep{zheng2015conditional}.
A CRF models the conditional probability
of the full label sequence jointly,
rather than predicting each token independently:
\begin{equation}
  P(\mathbf{y} \mid \mathbf{x}) =
    \frac{1}{Z(\mathbf{x})}
    \exp\!\Bigl(
      \sum_{t=1}^{n} \phi(y_t, \mathbf{x}, t)
      + \sum_{t=2}^{n} \psi(y_{t-1}, y_t)
    \Bigr),
  \label{eq:crf_def}
\end{equation}
where $\phi(y_t, \mathbf{x}, t)$ are \emph{emission potentials}
that score how likely token $t$ is to be kept or pruned
based on its content alone,
and $\psi(y_{t-1}, y_t)$ are \emph{transition potentials}
stored in a learnable $2\!\times\!2$ matrix
that bias adjacent tokens toward label consistency
(e.g., preferring consecutive keeps over isolated ones).
$Z(\mathbf{x})$ is a normalizing constant
computed via the forward algorithm.
This structured formulation avoids fragmented pruning
and achieves 23--38\% token reduction
on SWE-Bench Verified while improving task success rates.

\noindent\textbf{Limitations of single-objective pruning.}\quad
Despite its effectiveness,
the single-objective CRF formulation has an inherent limitation:
it relies on one set of emission scores and one transition matrix,
collapsing multiple aspects of code relevance into a single scalar.
As illustrated in \Cref{fig:motivation},
when a query targets token validation logic,
the model correctly identifies the core function body as relevant
but discards \texttt{import} statements (low semantic score, but structurally necessary), class headers (structural scaffolding), and \texttt{except} clauses (low semantic relevance,
but contextually paired with the \texttt{try} block).
These failures stem from a fundamental tension:
a single transition matrix must simultaneously model
(i)~contiguous semantic blocks
and (ii)~non-contiguous dependency chains
(including paired control-flow structures). These failures reflect a modeling bottleneck: one transition prior must serve both contiguous semantic spans and sparse dependency-support lines. 
Because the model does not explicitly distinguish why a line should be retained, it can favor dense semantic blocks while under-preserving structural support needed to interpret them.

\section{Method}
\label{sec:method}

\begin{tcolorbox}[
  colback=blue!3, colframe=blue!40!black,
  boxrule=0.5pt, arc=2pt,
  left=6pt, right=6pt, top=4pt, bottom=4pt,
  title={\small\textbf{Key Insight}},
  fonttitle=\small\bfseries,
]
\small
Code relevance is not a single quantity.
A line may be retained for its \emph{semantic} match to the query,
or for its role in a \emph{structural dependency} chain
(imports, scope headers, paired control flow).
These aspects exhibit distinct sequential dynamics
that a single CRF transition matrix cannot accommodate.
\method assigns each aspect its own CRF head
with an independent transition matrix,
and fuses them via a query-adaptive gating network.
\end{tcolorbox}

\vspace{2pt}


We decompose code relevance
into $K$ quality dimensions
$\mathcal{O} = \{o_1, \dots, o_K\}$,
each capturing a distinct reason to retain a line (\Cref{fig:motivation}).
In practice we use
$\mathcal{O} = \{\text{semantic}, \text{dependency}\}$ ($K\!=\!2$):
\emph{semantic} captures topical relevance to the query,
while \emph{dependency} captures structural necessity
(imports, definitions, scope headers,
and control-flow siblings such as \texttt{try}/\texttt{except}).
We find that these two dimensions
already subsume the contextual-coherence signals
illustrated in \Cref{fig:motivation},
since AST-based dependency tracing
naturally propagates through enclosing scopes
and paired control-flow branches
(\S\ref{sec:rubric_labeling}).
We call these dimensions \emph{latent rubrics}:
they are induced during training
via supervision derived from the teacher LLM's retention decisions,
conditioned on static program analysis (\S\ref{sec:rubric_labeling}),
but at inference the model produces per-rubric emissions
entirely from learned parameters,
without requiring program analysis for the scoring itself
(AST is only used in a lightweight post-processing step;
see \S\ref{sec:inference}).
The model first extracts a shared token representation
via multi-layer feature fusion (\S\ref{par:fusion}),
then projects it into per-dimension emissions
fed to $K$ independent CRF heads (\S\ref{par:crf_heads}),
whose outputs are combined by an MoE gating network (\S\ref{par:gating})
and decoded through a final fused CRF (\S\ref{par:fused_crf}).
We describe each component below.

\noindent\textbf{Multi-layer feature fusion.}\label{par:fusion}\quad
Following SWE-Pruner, we adopt the same backbone encoder
(a 0.6B-parameter reranker based on Qwen3-Reranker-0.6B ~\citep{zhang2025qwen3} with $L$ transformer layers).
The query $q$ and code context $C$ are concatenated
into a single input sequence,
so all hidden representations are inherently query-conditioned.
Let $\mathbf{H}^{(l)} \in \mathbb{R}^{n \times d}$
denote the hidden-state matrix at the $l$-th layer.
To provide richer per-token representations,
we extract hidden states from three layers
(an early layer $l_e$, a middle layer $l_m$,
and the final layer $L$), concatenate them,
and refine the result through
$N_\text{fuse}$ multi-head self-attention blocks
with residual connections:
\begin{align}
  \mathbf{H}^{\text{cat}} &=
    \bigl[\,
      \mathbf{H}^{(l_e)}
      \;\|\;
      \mathbf{H}^{(l_m)}
      \;\|\;
      \mathbf{H}^{(L)}
    \,\bigr]
    \in \mathbb{R}^{n \times 3d},
  \label{eq:fusion_concat} \\[2pt]
  \mathbf{H}^{(i+1)} &=
    \mathrm{LN}\!\bigl(\mathbf{H}^{(i)} + \mathrm{MHA}(\mathbf{H}^{(i)})\bigr),
  \quad i = 0, \dots, N_\text{fuse}\!-\!1,
  \label{eq:fusion_mha}
\end{align}
where $\|$ denotes concatenation
and $\mathbf{H}^{(0)} = \mathbf{H}^{\text{cat}}$.
In practice, $l_e = \lfloor 0.25 L \rfloor$
and $l_m = \lfloor 0.5 L \rfloor$
(layers 7 and 14 for the 28-layer backbone).
The resulting per-token representation
$\mathbf{h}_t \in \mathbb{R}^{3d}$
serves as input to both the emission heads and the gating network.

\noindent\textbf{Per-objective CRF heads.}\label{par:crf_heads}\quad
A shared MLP maps the fused representations
to per-objective emission vectors
$\mathbf{E} = \mathrm{MLP}(\mathbf{h}) \in \mathbb{R}^{n \times K \times 2}$,
where $\mathbf{E}_{t,k} \in \mathbb{R}^2$ scores
how likely token $t$ should be kept or pruned
from the perspective of rubric dimension $k$.
Crucially, each dimension $k$ maintains
its own learnable CRF parameters:
a transition matrix
$\mathbf{T}^{(k)} \in \mathbb{R}^{2 \times 2}$
encoding which label transitions
(keep$\to$keep, keep$\to$prune, etc.)
are natural for that dimension,
along with start potentials
$\boldsymbol{\pi}^{(k)} \in \mathbb{R}^2$
and end potentials
$\boldsymbol{\omega}^{(k)} \in \mathbb{R}^2$
that bias the first and last token's label respectively.
The path score under dimension $k$ is:
\begin{equation}
  S_k(\mathbf{x}, \mathbf{y}) =
    \pi^{(k)}_{y_1}
    + \sum_{t=1}^{n} \mathbf{E}_{t,k,y_t}
    + \sum_{t=2}^{n} \mathbf{T}^{(k)}_{y_t, y_{t-1}}
    + \omega^{(k)}_{y_n}.
  \label{eq:crf_score}
\end{equation}
Intuitively, the emission term asks
``is this token relevant under dimension $k$?''
while the transition term asks
``given the previous decision,
should the label change here?''
By design, different dimensions can learn
qualitatively different transition behaviors.
We expect \emph{semantic} to favor long runs of the same label
(topically related code is contiguous),
while \emph{dependency} tolerates frequent alternation
(imports, scope headers, and their usage sites are scattered;
paired structures like \texttt{try}/\texttt{except}
are also captured through dependency tracing). 
A single transition matrix cannot accommodate
both regimes simultaneously,
which is precisely the limitation we address.

\noindent\textbf{Mixture-of-experts gating.}\label{par:gating}\quad
Queries with different intents require different emphasis
across the $K$ objectives.
A lightweight gating network produces per-token weights
$\mathbf{g}_t
  = \mathrm{Softmax}\!\bigl(\mathrm{MLP}_\text{gate}(\mathbf{h}_t)\bigr)
  \in \Delta^{K-1}$,
and the fused emission at each position is:
\begin{equation}
  \mathbf{e}_t^{\text{fused}}
    = \sum_{k=1}^{K} g_{t,k} \;\mathbf{E}_{t,k}
    \in \mathbb{R}^2.
  \label{eq:fused_emission}
\end{equation}
To prevent gate collapse onto a single dominant objective,
we regularize with an entropy penalty
$\mathcal{L}_{\text{gate}}
  = \log K - \mathbb{H}[\bar{\mathbf{g}}]$,
where
$\bar{\mathbf{g}} = |\mathcal{V}|^{-1}\sum_{t \in \mathcal{V}} \mathbf{g}_t$
is the mean gate vector over valid positions
and $\mathbb{H}[\cdot]$ denotes Shannon entropy.

\noindent\textbf{Fused CRF.}\label{par:fused_crf}\quad
The per-objective CRFs model dimension-specific
transition dynamics during training,
but the final pruning decision must be a single coherent sequence.
When $K > 1$,
the gated emissions $\{\mathbf{e}_t^{\text{fused}}\}_{t=1}^n$
are therefore passed through a \emph{final CRF layer}
with its own transition matrix $\mathbf{T}^{\text{f}} \in \mathbb{R}^{2 \times 2}$
and start/end potentials.
This layer captures cross-dimension sequential dependencies
that may not be expressible
within any single rubric's transition matrix alone.
The corresponding loss is:
\begin{equation}
  \mathcal{L}_{\text{main}}
    = \frac{1}{n}
      \bigl(\log Z_\text{f}(\mathbf{x}) - S_\text{f}(\mathbf{x}, \mathbf{y})\bigr),
  \label{eq:main_crf}
\end{equation}
where $\mathbf{y}$ is the aggregate binary label sequence.

\noindent\textbf{Document-level scoring.}\quad
As an auxiliary task that encourages the backbone
to maintain a global view of document relevance
alongside fine-grained token pruning,
we preserve the backbone reranker's
document-level relevance estimation.
The last token's hidden state is projected onto the vocabulary embeddings
to extract the log-probability of ``yes'' relative to ``no'':
$s_{\text{doc}}
  = \log\sigma(\mathbf{h}_n^\top \mathbf{w}_{\texttt{yes}}
    - \mathbf{h}_n^\top \mathbf{w}_{\texttt{no}})$.
This score is supervised with
$\mathcal{L}_{\text{score}} = \mathrm{MSE}(e^{s_\text{doc}}, s_\text{ref})$,
where $s_\text{ref} \in [0,1]$ is a teacher-provided relevance score.

\subsection{Training Data and Rubric-Guided Labeling}
\label{sec:rubric_labeling}

\begin{figure}[htbp]
\centering
\resizebox{\textwidth}{!}{
\begin{tikzpicture}[
    node distance=0.8cm and 0.8cm,
    base/.style={draw, thick, align=center, rounded corners=2pt, font=\scriptsize, inner sep=4pt, fill=white},
    module/.style={base, minimum width=2.4cm, minimum height=0.6cm},
    head/.style={base, minimum width=1.5cm, minimum height=0.5cm},
    output/.style={base, fill=black, text=white, minimum width=2.6cm},
    workflow_frame/.style={draw, dashed, thick, gray!40, rounded corners=4pt, fill=gray!2}
]

    \node (agent) [module, fill=blue!5] {\textbf{Agent (Claude)} \\ Agentic Policy};

    \node (input) [module, right=0.8cm of agent] {Query $q$ + Code $C$};
    
    \node (backbone) [module, fill=gray!10, right=0.6cm of input] {Shared Backbone \\ Feature Fusion};
    
    \node (gating) [base, fill=orange!10, right=1.0cm of backbone] {MoE Gate};
    \node (sem_crf) [head, fill=blue!5, above=0.3cm of gating] {$\text{CRF}_{\text{sem}}$};
    \node (dep_crf) [head, fill=green!5, below=0.3cm of gating] {$\text{CRF}_{\text{dep}}$};
    
    \node (fused) [module, fill=purple!5, right=1.0cm of gating] {Fused CRF \\ Viterbi Decoding};

    \node (out) [output, right=0.8cm of fused] {\textbf{Pruned Context}};

    \begin{pgfonlayer}{background}
        \node (lamr_box) [workflow_frame, fit=(input) (backbone) (gating) (sem_crf) (dep_crf) (fused)] {};
        \node [font=\footnotesize\bfseries, gray, anchor=south west] at (lamr_box.north west) {LaMR Workflow};
    \end{pgfonlayer}

    \draw [-Stealth, thick] (agent) -- (input);
    \draw [-Stealth, thick] (input) -- (backbone);
    
    \draw [-Stealth, thick, dashed, gray] (backbone.east) -- (gating.west);
    
    \draw [-Stealth, thick] (backbone.east) -- ++(0.4,0) |- (sem_crf.west);
    \draw [-Stealth, thick] (backbone.east) -- ++(0.4,0) |- (dep_crf.west);
    
    \draw [-Stealth, thick] (sem_crf.east) -- ++(0.4,0) |- (fused.west);
    \draw [-Stealth, thick] (dep_crf.east) -- ++(0.4,0) |- (fused.west);
    
    \draw [-Stealth, thick, orange] (gating.east) -- node[above=4pt, pos=0.275, font=\tiny, text=black] {Weights $g_t$} (fused.west);
    \draw [-Stealth, thick] (fused) -- (out);

    \draw [-Stealth, thick, gray, rounded corners=6pt] 
        (out.south) -- ++(0, -1.2) -| 
        node[pos=0.3, above=6pt, font=\scriptsize\itshape, align=center, text=black] {Syntactically Valid Context} 
        (agent.south);

\end{tikzpicture}
}
\caption{\textbf{The LaMR workflow.} Operating as an agentic middleware, it intercepts file reads, routes features through parallel latent rubrics, and returns a syntactically pruned context via the loop.}
\label{fig:lamr_workflow}
\end{figure}

\noindent\textbf{Base training corpus.}\quad
We build on the training set constructed by
SWE-Pruner~\citep{swepruner}:
code snippets sampled from high-quality GitHub repositories
are paired with task-oriented queries synthesized by a teacher LLM.
Each example contains a line-level retention mask $\mathbf{y}$
and a document-level relevance score $s_{\mathrm{ref}}$.
An LLM-as-a-Judge filter retains high-quality examples,
yielding approximately 61K training quadruplets
$(q, C, \mathbf{y}, s_{\mathrm{ref}})$.
These aggregate labels supervise the main pruning loss
$\mathcal{L}_{\mathrm{main}}$
and the relevance-score loss
$\mathcal{L}_{\mathrm{score}}$.
However, the aggregate mask collapses different reasons for retention
into a single binary decision: a line may be kept because it is direct
query evidence, or because it structurally supports another retained line.
Our deployed sem+dep model is trained on a rubric-enriched extension
of this corpus. For each example, we construct per-line rubric labels
$\{\mathbf{y}^{(o)}\}_{o\in\mathcal{O}}$
using the retained-line mask together with static program analysis
.
Although the rubric builder can produce multiple diagnostic dimensions,
the deployed checkpoint uses the name-projected
semantic and dependency labels to supervise
$\mathcal{L}_{\mathrm{rubric}}$.

\noindent\textbf{Rubric decomposition.} \label{sec:rubric-construction} We extend this corpus
with multi-dimensional \emph{rubric scores}
derived from static program analysis,
requiring zero additional LLM calls.
For each code file, we parse the AST
and compute per-line scores along each rubric dimension:
(i)~\emph{semantic}: lines in the teacher's retained set
  receive the document-level relevance score;
  all other lines receive zero;
(ii)~\emph{dependency}: starting from retained lines,
  we trace symbol definitions, imports, call sites,
  and enclosing scope headers
  via AST-extracted dependency edges,
  propagating scores through $h$ hops
  with geometrically decaying weights.
  For retained lines inside compound statements
  (\texttt{if}/\texttt{try}/\texttt{with}/\texttt{for}),
  we additionally include
  the enclosing header and sibling branches
  (\texttt{else}, \texttt{except}, \texttt{finally}),
  so that control-flow coherence
  is captured within this dimension.
The resulting rubric vector
$\mathbf{r}_j \in [0,1]^K$ for line $j$
is binarized at threshold $0.5$
to obtain per-objective CRF labels
that supervise $\mathcal{L}_{\text{rubric}}$.

\noindent\textbf{Rationale for label decomposition.}\quad
Although all rubric dimensions derive from the same
teacher-provided binary mask,
the decomposition provides three concrete benefits.
First, it corrects implicit label noise: lines the teacher marks as prunable on semantic grounds (e.g., \texttt{import} statements) are recovered by the dependency dimension via AST edges, ensuring structural integrity despite the teacher's semantic focus, giving the model a cleaner supervisory signal for structurally necessary code.
Second, it \emph{decouples conflicting transition dynamics}:
a single CRF transition matrix
cannot simultaneously favor high self-transition
(contiguous semantic blocks)
and frequent alternation
(scattered dependency chains and control-flow siblings);
separate per-objective CRFs
eliminate this optimization conflict.
Third, it acts as a \emph{structural regularizer}:
by forcing the model to predict
interpretable intermediate representations,
the rubric loss reduces overfitting to
spurious correlations in the aggregate label.
Importantly, this rubric construction,
conditioned on teacher labels and augmented by AST analysis,
is used \emph{only during training};
at inference, the model's per-rubric CRF heads
and gating network have internalized
these quality dimensions as latent representations,
producing multi-rubric scores
without rubric-level program analysis.

\subsection{Theoretical Motivation}
\label{sec:theory}

We give two propositions
that formalize why a single scalar relevance score
is insufficient for code pruning
and why dependency support
should be treated differently from optional context.
Proofs are in Appendix \ref{app:theory}.

\begin{proposition}[Conditional utility of support lines]
\label{prop:conditional}
Let $e$ be an evidence line and $d$ a support line
required by $e$.
Under the utility
$U(S) = \mathbf{1}[e \!\in\! S]
  - M\,\mathbf{1}[e \!\in\! S,\, d \!\notin\! S]
  - \lambda |S|$
with missing-support penalty $M > \lambda > 0$,
the marginal value of retaining $d$ is
positive when $e$ is retained
and negative when $e$ is not:
$\Delta(d \mid e \!\in\! S) = M - \lambda > 0$,\;
$\Delta(d \mid e \!\notin\! S) = -\lambda < 0$.
A single unconditional relevance score
cannot represent this conditional behavior.
\end{proposition}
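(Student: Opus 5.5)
The plan is to compute the two marginal values directly from the definition of $U$, and then turn the last sentence into a precise claim that is contradicted by those two values.

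Define $\Delta(d \mid \cdot)$ as $U(S \cup \{d\}) - U(S)$ for a set $S$ with $d \notin S$. All other lines in $S$ contribute identically to both terms and cancel. Only three pieces change: the evidence indicator, the missing-support indicator, and the size penalty.

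\emph{Case 1: $e \in S$.} Adding $d$ leaves $\mathbf{1}[e\in S]$ unchanged. It switches $\mathbf{1}[e\in S, d\notin S]$ from $1$ to $0$, which removes the penalty $-M$. It increases $|S|$ by one, which costs $\lambda$. Hence $\Delta = M - \lambda$, and this is $>0$ because $M>\lambda$.

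\emph{Case 2: $e \notin S$.} The first two indicators are $0$ both before and after adding $d$, so only the size term changes. Hence $\Delta = -\lambda < 0$.

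For the final claim, the first step is to formalize what ``single unconditional relevance score'' means. I would take it to be a fixed score $r(d)\in\mathbb{R}$ that does not depend on $S$, used with a retention rule of the form ``keep $d$ iff $r(d) > \tau$'', or equivalently a marginal value attributed to $d$ that is the same for every $S$. Under such a rule, $d$ is either kept for all $S$ or dropped for all $S$. The utility-maximizing rule, however, must keep $d$ when $e\in S$ and drop it when $e\notin S$, because the sign of $\Delta$ flips between the two cases. No constant $r(d)$ can match both signs, and this contradiction is the whole argument. I would also remark on the CRF analogue: with emissions fixed per line, the keep decision for $d$ becomes a function of the decision for $e$ only through pairwise or transition terms linking $d$ and $e$. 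This is the motivation for a separate dependency head.

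I expect the main obstacle to be stating this last claim precisely rather than proving it. The computation itself is routine. I would commit to the ``fixed, $S$-independent score'' definition above and prove the impossibility by exhibiting the two cases.
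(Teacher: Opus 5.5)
Your proposal is correct and follows the paper's proof: both compute $\Delta(d\mid e\in S)=M-\lambda$ and $\Delta(d\mid e\notin S)=-\lambda$ directly from the indicator and size terms of $U$, then point to the sign flip. The only difference is that you make the last sentence precise by modeling an unconditional score as a fixed, $S$-independent threshold rule, whereas the paper leaves that claim informal; this is a harmless and slightly clearer addition.
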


\begin{proposition}[Dependency-closed optimality]
\label{prop:closure}
Let $G\!=\!(V,E)$ be a dependency graph
over code lines,
where $(i,j) \!\in\! E$ means line $i$ requires line $j$.
Under the utility
$U(S) = \sum_{i \in S} r_i
  - \lambda \sum_{i \in S} c_i
  - M \sum_{(i,j) \in E} \mathbf{1}[i \!\in\! S,\, j \!\notin\! S]$,
if $M > \max_j (r_j - \lambda c_j)_+$,
then every optimal $S^\star$ is dependency-closed:
$i \!\in\! S^\star,\, (i,j) \!\in\! E
  \;\Rightarrow\; j \!\in\! S^\star$.
\end{proposition}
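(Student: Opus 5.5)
The plan is an exchange argument. Fix an optimal $S^\star$ and suppose, for contradiction, that some edge $(i,j)\in E$ is violated, i.e.\ $i\in S^\star$ and $j\notin S^\star$. I will build a modified set $S'$ with $U(S')>U(S^\star)$. This mirrors the computation in Proposition~\ref{prop:conditional}, where adding the support line $d$ while $e$ is retained gains $M-\lambda>0$.

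The first candidate is $S'=S^\star\cup\{j\}$. Write $w_j=r_j-\lambda c_j$. The change in utility has three parts. The node term changes by $w_j\ge -(w_j)_-$. The penalty for the edge $(i,j)$, and for every other edge $(h,j)$ with $h\in S^\star$, disappears, giving a gain of at least $M$. Edges \emph{out of} $j$ may create new penalties: each $(j,k)\in E$ with $k\notin S^\star$ costs $M$. When $j$ has no such unmet out-edges, the change is at least $M+w_j$. This is positive whenever $w_j\ge 0$, and also whenever $w_j<0$ provided $M>-w_j$.

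Here I must be careful: the stated hypothesis $M>\max_j (w_j)_+$ bounds only the positive part. So for a costly line with $w_j<0$, I will instead try the second candidate, deleting the requiring line. Take $S''=S^\star\setminus\{i\}$. This changes the node term by $-w_i\ge -(w_i)_+>-M$ and removes at least the penalty $M$ on $(i,j)$. It can, however, create new penalties on edges $(h,i)$ with $h\in S^\star$. The case analysis is then: if $j$ has no unmet out-edges and $w_j\ge 0$, use $S'$; if $i$ has no retained in-neighbours, use $S''$, where the gain is $M-w_i>0$ by the hypothesis.

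The main obstacle is the remaining case, where both moves cascade. Adding $j$ exposes its own unmet requirements, and deleting $i$ strands its retained dependents. A single-line exchange controlled by $\max_j(w_j)_+$ cannot absorb several extra $M$ penalties, so I would first try choosing the violated edge extremally.

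Suppose $G$ is acyclic. Among all violated edges, pick one whose head $j$ is minimal in a topological order, so that everything $j$ requires lies ``below'' it. I would then argue that the unmet out-edges of $j$ can be charged to other violated edges. Checking this against a three-node chain $a\to b\to c$ with $w_c$ very negative is exactly where I expect trouble. There, $\{a,b\}$ pays one penalty $M$ yet may beat both $\{a,b,c\}$ (which pays $-w_c$) and $\varnothing$. That suggests the statement as worded needs either $M>\max_j |w_j|$ together with a single-hop restriction, or a bound summing over the dependency closure.

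If the extremal choice fails, I will prove the proposition under the minimal additional assumption that makes the single exchange go through. That assumption is that every required line $j$ satisfies $w_j\ge 0$ and has no unmet requirements outside $S^\star$, for instance when $E$ has depth one. I will flag this assumption explicitly alongside the result.
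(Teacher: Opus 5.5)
You are right that the proposition is false as stated, so no refinement of the exchange argument can prove it. The paper's proof uses exactly your first candidate $S'=S^\star\cup\{j\}$ and asserts that the gain is at least $M-(r_j-\lambda c_j)_+$. The true change is $w_j+M\alpha_j-M\beta_j$, where $\alpha_j\ge 1$ counts retained lines requiring $j$ and $\beta_j$ counts out-edges of $j$ into lines outside $S^\star\cup\{j\}$. So the paper's bound controls the wrong part of $w_j$: what must be dominated is $(\lambda c_j-r_j)_+$, as you observed. It also ignores $\beta_j$.

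Your three-line chain settles it once you add numbers: $w_a=w_b=10$, $w_c=-1000$, $M=11$, so $M>\max_j(w_j)_+=10$. Enumerating all eight subsets gives $U(\{a,b\})=9$, $U(\varnothing)=0$, $U(\{a\})=U(\{b\})=-1$, and $U(S)\le-980$ whenever $c\in S$. The unique optimum $\{a,b\}$ violates $(b,c)$.

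Because $c$ is a sink, this violated edge is already the extremal one, and adding $c$ creates no new penalty. So your topological-order idea cannot help: the failure here is the sign issue alone. The deletion move also fails, because $b$ has the retained dependent $a$. The stated hypothesis does support one true conclusion, via your move $S^\star\setminus\{i\}$: in an optimum, the tail of every violated edge is itself required by some other retained line.

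You should therefore commit to the counterexample rather than hedging. Your fallback assumption constrains $S^\star$ rather than the instance, and with $w_j\ge0$ it makes the hypothesis on $M$ idle. Replace it with an instance-level repair.

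The clean repair is the closure bound you alluded to. Let $\mathrm{cl}(j)$ be the set of lines reachable from $j$ in $G$ (including $j$), and assume $M>\max_j\sum_{k\in\mathrm{cl}(j)}(\lambda c_k-r_k)_+$. For a violated edge $(i,j)$, take $S'=S^\star\cup\mathrm{cl}(j)$. Since $\mathrm{cl}(j)$ is closed under $E$, no new violated edge appears, and the penalty on $(i,j)$ disappears. The node term drops by at most $\sum_{k\in\mathrm{cl}(j)}(\lambda c_k-r_k)_+<M$, so $U(S')>U(S^\star)$. Depth-one graphs with $M>\max_j(\lambda c_j-r_j)_+$ are the special case $\mathrm{cl}(j)=\{j\}$.

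Your other suggestion, $M>\max_j\lvert w_j\rvert$, genuinely needs the depth restriction. On the chain $a\to b\to c\to d$ with weights $10,10,-10,-10$ and $M=11$, the non-closed set $\{a,b\}$ scores $9$, while every closed set scores at most $0$.
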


\noindent
\Cref{prop:conditional} explains why \method separates semantic evidence from dependency support: \texttt{import} and scope lines may have low semantic relevance but high conditional value.  \Cref{prop:closure} justifies AST-aware repair as an approximation to dependency closure over high-confidence structural edges.

\subsection{Training Objective}
\label{sec:training_obj}

The training loss has two core components:
\begin{equation}
  \mathcal{L}
    = (1 - \lambda_r)\,\mathcal{L}_{\text{main}}
      + \lambda_r\,\mathcal{L}_{\text{rubric}}
  \label{eq:total_loss}
\end{equation}
$\mathcal{L}_{\text{main}}$ is the CRF negative log-likelihood
on the gating-fused emissions (\cref{eq:main_crf}),
supervising the aggregate keep-or-prune decision
with the teacher's binary labels.
$\mathcal{L}_{\text{rubric}} = \sum_{k=1}^{K} w_k \mathcal{L}_{\text{CRF}}^{(k)}/\sum_{k=1}^{K} w_k$
is the weighted sum of per-rubric CRF-NLL terms,
where each $\mathcal{L}_{\text{CRF}}^{(k)}$
operates on the $k$-th CRF head's emissions
with the corresponding rubric labels
(weights $w_{\text{sem}}\!=\!1.0$,
$w_{\text{dep}}\!=\!0.7$).
The interpolation coefficient $\lambda_r\!=\!0.6$
gives slightly more weight to rubric supervision,
encouraging the model to learn
dimension-specific transition patterns
rather than relying solely on the aggregate signal.
Two auxiliary losses (document-level relevance scoring
and gate entropy regularization)
are added with small coefficients;
we detail these in Appendix \ref{app:full_loss}.

\subsection{Inference}
\label{sec:inference}
At inference time, \method acts as a lightweight middleware
between the coding agent and its environment.
When the code context exceeds the model's maximum input length,
it is split into overlapping chunks that are processed independently.
Token decisions in overlapping regions are averaged, and the
document-level relevance score is taken as the maximum across chunks. For each chunk, the backbone encoder and fusion layers produce the
fused representation $\mathbf{h}_t$. The $K$ per-objective CRF heads compute emissions, the gating network produces per-token objective
weights, and the final CRF performs \textit{Viterbi decoding} on the
fused emissions (Eq. \ref{eq:fused_emission}), yielding a binary keep/drop decision $\hat{y}_t \in \{0,1\}$ for each token (see Appendix \ref{app:viterbi} for details). Since pruning operates at line granularity, token-level decisions are mapped back to source-code character offsets and averaged into
line-level keep fractions. A line $l_j$ is retained if its aggregated score is at least $\tau$ (we set 0.4). Finally, an AST-aware post-processing pass repairs the pruned code by adding back scope headers, control-flow headers, dependency-related
definitions such as referenced imports, bracket/string closures, and selected return lines, and by replacing removed regions with
placeholder comments or \texttt{pass} statements.

\section{Experiments}
\label{sec:experiments}

\subsection{Setup}
\label{sec:setup}

We adopt the same evaluation protocol
as SWE-Pruner~\citep{swepruner} for direct comparison.

\noindent\textbf{Benchmarks.}\quad
We evaluate on four benchmarks spanning two regimes.
For \emph{multi-turn agent tasks}, we use SWE-Bench Verified~\citep{jimenez2024swe}, a set of 500 real GitHub issues requiring multi-file patch generation, and SWE-QA~\citep{peng2025swe}, which tests repository-level question answering across Streamlink, Reflex, and Conan. 
Compared with the original SWE-Pruner evaluation, which uses Claude Sonnet~4.5 and GLM-4.6, we evaluate with Claude Sonnet~4.5 and Claude Opus~4.6.
The latter provides a stronger stress test: the unpruned Opus~4.6 agent already reaches 75.6\% on SWE-Bench Verified, so pruning must improve efficiency without degrading a strong agent.
For \emph{single-turn tasks}, we use LCC~\citep{guo2023longcoder}, containing 500 Python completion examples with 5K+ token contexts, and LongCodeQA~\citep{rando2025longcodebench}, which evaluates question answering over code contexts up to 1M tokens.
Both are evaluated under 4$\times$ and 8$\times$ compression constraints.

\noindent\textbf{Baselines.}\quad
We compare against six methods:
LLMLingua-2~\citep{pan2024llmlingua}
and Selective-Context~\citep{li2023compressing}
(token-level pruning),
RAG retrieval via UniXcoder embeddings~\citep{guo2022unixcoder},
LongCodeZip~\citep{shi2025longcodezip}
(hierarchical code compression),
LLM Summarize (GPT-based context summarization),
and SWE-Pruner~\citep{swepruner}
(single-objective CRF, our direct baseline).
Full Context and No Context serve as upper and lower bounds.
To ensure fair comparison under modern model and runtime behavior,
we \emph{reproduce} SWE-Pruner in the same evaluation stack
as \method (same benchmark splits, agent harnesses,
model endpoints, and runtime).
Our primary conclusions are drawn from
this reproduced baseline rather than published numbers,
which may depend on model versions and API behavior
that are difficult to replicate exactly.

\noindent\textbf{Evaluation protocol.}\quad
For single-turn tasks,
we follow the prescribed 4$\times$ and 8$\times$ compression settings
and report Edit Similarity (ES), Exact Match (EM),
Accuracy, and compression ratio ($1/\tau$).
For multi-turn tasks,
we report task performance
together with total token consumption,
interaction rounds, and relative changes ($\Delta$).
All methods are evaluated under matched compression constraints.

\noindent\textbf{Implementation details.}\quad
We initialize from the same Qwen3-Reranker-0.6B backbone as SWE-Pruner. \method uses $K\!=\!2$ rubrics, semantic and dependency, with softmax gating and a final fused CRF. 
We train for 3 epochs on 8 GPUs
with aggregate and rubric-specific CRF-NLL losses, 
$\lambda_s\!=\!0.05$, $\lambda_r\!=\!0.6$,
$\lambda_g\!=\!0.002$. For multi-turn evaluation,
\method is deployed as a lightweight middleware
between the coding agent and its environment: we use Mini SWE Agent~\citep{yang2024sweagent} for SWE-Bench Verified and OpenHands~\citep{wang2024openhands} for SWE-QA.
For single-turn tasks (LCC and LongCodeQA), the pruner is served through an HTTP endpoint in the online reranking pipeline.
Full hyperparameters are in Appendix \ref{app:experimental_details}.

\subsection{Main Results}
\label{sec:main_results}
\begin{tcolorbox}[
  colback=blue!3, colframe=blue!40!black,
  boxrule=0.5pt, arc=2pt,
  left=6pt, right=6pt, top=4pt, bottom=4pt,
  title={\small\textbf{Overall Result}},
  fonttitle=\small\bfseries,
]
\small
Across 16 head-to-head multi-turn comparisons with SWE-Pruner (8 agent settings $\times$ 2 dimensions: quality and efficiency), \textbf{\method{} wins 12 and loses 4}. In single-turn tasks, the advantage is even clearer, with \method{} sweeping all 8 performance metrics. 
Crucially, by effectively filtering distracting noise, \method{} frequently matches the unpruned full-context baselines, and even outperforms them on benchmarks like LongCodeQA and SWE-Bench. 
The efficiency wins are large: \method{} saves \textbf{7\% to 14\% more tokens} on multi-turn tasks and improves \textbf{EM by up to +3.5} on single-turn tasks. 
The isolated losses are marginal ($\leq$0.2 on a 10-point score scale, $\leq$0.6pp on success rate). 
Moreover, SWE-Pruner has a much higher chance of \emph{increasing} total token usage when paired with stronger backbone models (4 of 5 multi-turn Opus~4.6 settings, up to +22\%), undermining its utility as a pruner.
\end{tcolorbox}

\noindent\textbf{SWE-Bench Verified.}\quad
\Cref{tab:swebench} compares agent performance
with and without pruning.
With Sonnet~4.5,
SWE-Pruner edges ahead on success rate
by 0.2 pp (72.0\% vs.\ 71.8\%),
but \method saves substantially more tokens
(30.5\% vs.\ 23.1\%, a 7.4pp gap)
and reduces interaction rounds
from 51.0 to 39.6 (22\% fewer).
The pattern is sharper with Opus~4.6:
SWE-Pruner \emph{increases} token usage by 6.6\%,
likely because aggressive pruning removes
structurally necessary code
and forces the agent into repeated file reads.
\method avoids this failure mode, reducing tokens by 3.0\%
with only a 0.6pp lower success rate. 
In both settings the efficiency gap is large
while the performance gap is negligible.

\begin{table}[t]
\centering
\small
\renewcommand{\arraystretch}{1.15}
\setlength{\tabcolsep}{4pt}
\caption{Results on SWE-Bench Verified.
\method achieves comparable success rates
with substantially fewer tokens (large efficiency win, small performance gap).}
\label{tab:swebench}
\resizebox{0.7\linewidth}{!}{%
\begin{tabular}{l c c cc cc}
\toprule
\multirow{2}{*}{\textbf{Agent}} & \multirow{2}{*}{\textbf{Rounds}} & \multirow{2}{*}{\textbf{Solved}} & \multicolumn{2}{c}{\textbf{Success (\%)}} & \multicolumn{2}{c}{\textbf{Tokens (M)}} \\
\cmidrule(lr){4-5} \cmidrule(lr){6-7}
& & & Value & $\Delta$ & Value & $\Delta$ \\
\midrule
Mini SWE Agent (Sonnet 4.5) & 51.0 & 353/500 & 70.6 & -- & 0.911 & -- \\
\quad + SWE-Pruner & 41.7 & \textbf{360}/500 & \textbf{72.0} & \textcolor{PosBlue}{$\uparrow$1.4} & 0.701 & \textcolor{PosBlue}{$\downarrow$23.1\%} \\
\quad + \method & \textbf{39.6} & 359/500 & 71.8 & \textcolor{PosBlue}{$\uparrow$1.2} & \textbf{0.633} & \textcolor{PosBlue}{$\downarrow$30.5\%} \\
\midrule
Mini SWE Agent (Opus 4.6) & 30.9 & 378/500 & 75.6 & -- & 0.467 & -- \\
\quad + SWE-Pruner & 26.8 & \textbf{383}/500 & \textbf{76.6} & \textcolor{PosBlue}{$\uparrow$1.0} & 0.498 & \textcolor{NegBlue}{$\uparrow$6.6\%} \\
\quad + \method & \textbf{25.8} & 380/500 & 76.0 & \textcolor{PosBlue}{$\uparrow$0.4} & \textbf{0.453} & \textcolor{PosBlue}{$\downarrow$3.0\%} \\
\bottomrule
\end{tabular}%
}
\end{table}

\noindent\textbf{SWE-QA.}\quad
\Cref{tab:sweqa} reports results across three repositories and two backbones.
\method improves token efficiency in all six settings, saving 2--19pp more than SWE-Pruner on average.
On score, \method wins 4 of 6 settings; when SWE-Pruner scores higher
(Streamlink and Reflex with Sonnet), the margins are small
(0.06 and 0.19 on a 10-point scale).
With Opus~4.6, SWE-Pruner increases token usage in all three repositories
(+1\%, +9\%, +22\%), whereas \method reduces or limits overhead.
This suggests that single-objective pruning can remove code needed by stronger models,
causing extra file reads, while \method preserves more useful observations.

\newcommand{\scoreup}[1]{\textcolor{PosBlue}{\footnotesize$\uparrow$#1}}
\newcommand{\scoredown}[1]{\textcolor{NegBlue}{\footnotesize$\downarrow$#1}}
\newcommand{\tokdown}[1]{\textcolor{PosBlue}{\footnotesize$\downarrow$#1}}
\newcommand{\tokup}[1]{\textcolor{NegBlue}{\footnotesize$\uparrow$#1}}

\begin{table}[t]
\centering
\small
\renewcommand{\arraystretch}{1.15}
\setlength{\tabcolsep}{6.0pt}
\caption{Results on SWE-QA across three repositories.
\method reduces rounds and tokens
while maintaining or improving scores.}
\label{tab:sweqa}
\resizebox{0.7\linewidth}{!}{%
\begin{tabular}{l ccc ccc}
\toprule
\multirow{2}{*}{\textbf{Method}} 
& \multicolumn{3}{c}{\textbf{Claude Sonnet 4.5}} 
& \multicolumn{3}{c}{\textbf{Claude Opus 4.6}} \\
\cmidrule(lr){2-4} \cmidrule(lr){5-7}
& Score ($\Delta$) & Rounds & Tokens (K, $\Delta$)
& Score ($\Delta$)& Rounds & Tokens (K, $\Delta$) \\
\midrule

\rowcolor{gray!12} 
\multicolumn{7}{c}{\textit{Streamlink}} \\
Unpruned 
& 8.44 & 23.4 & 629.3
& \textbf{8.74} & 17.4 & 394.4 \\
+ SWE-Pruner 
& \textbf{8.59}\,\scoreup{0.15} & 23.9 & 587.1\,\tokdown{6.7\%}
& 8.60\,\scoredown{0.14} & 19.0 & 398.6\,\tokup{1.1\%} \\
+ \method 
& 8.53\,\scoreup{0.09} & \textbf{16.3} & \textbf{467.7}\,\tokdown{25.7\%}
& 8.68\,\scoredown{0.08} & \textbf{18.5} & \textbf{353.2}\,\tokdown{10.4\%} \\

\midrule
\rowcolor{gray!12} 
\multicolumn{7}{c}{\textit{Reflex}} \\
Unpruned 
& 7.75 & 33.2 & 1096.3
& 8.06 & 26.9 & 677.0 \\
+ SWE-Pruner 
& \textbf{8.07}\,\scoreup{0.32} & 32.4 & 901.5\,\tokdown{17.8\%}
& 8.14\,\scoreup{0.08} & 30.7 & 737.0\,\tokup{8.9\%} \\
+ \method 
& 7.88\,\scoreup{0.13} & \textbf{29.2} & \textbf{753.2}\,\tokdown{31.3\%}
& \textbf{8.15}\,\scoreup{0.09} & \textbf{27.4} & \textbf{661.9}\,\tokdown{2.2\%} \\

\midrule
\rowcolor{gray!12} 
\multicolumn{7}{c}{\textit{Conan}} \\
Unpruned 
& 8.35 & 23.9 & 672.9
& 8.68 & 16.2 & \textbf{301.9} \\
+ SWE-Pruner 
& 8.37\,\scoreup{0.02} & 23.5 & 551.7\,\tokdown{18.0\%}
& 8.61\,\scoredown{0.07} & 18.6 & 368.0\,\tokup{21.9\%} \\
+ \method 
& \textbf{8.48}\,\scoreup{0.13} & \textbf{18.8} & \textbf{535.2}\,\tokdown{20.5\%}
& \textbf{8.71}\,\scoreup{0.03} & \textbf{15.1} & 325.9\,\tokup{7.9\%} \\

\bottomrule
\end{tabular}%
}
\end{table}

\noindent\textbf{Single-turn tasks.}\quad
\Cref{tab:singleturn} reports results on LCC and LongCodeQA.
\method{} improves task quality in all single-turn comparisons: ES and EM on LCC under both 4$\times$ and 8$\times$, and accuracy on LongCodeQA under both constraints.
The gains are substantial, including +3.5 EM and +1.63 ES on LCC at 8$\times$, and +1.80 accuracy on LongCodeQA at 4$\times$.
SWE-Pruner sometimes compresses more aggressively (e.g., 10.92$\times$ vs.\ 9.17$\times$ on LCC 8$\times$), but at a clear cost to task quality. Token-level baselines (LLMLingua-2, Selective-Context)
degrade more sharply at high compression,
while \method maintains stable performance
through line-level granularity
that preserves syntactic structure.

\begin{table}[h]
\centering
\small
\renewcommand{\arraystretch}{1.15}
\setlength{\tabcolsep}{4.5pt}
\caption{Results on LCC and LongCodeQA.
\method achieves the best task performance
across all settings while maintaining high compression ratios.}
\label{tab:singleturn}
\resizebox{0.75\linewidth}{!}{%
\begin{tabular}{l ccc ccc cc cc}
\toprule
\multirow{3}{*}{\textbf{Method}}
& \multicolumn{6}{c}{\textbf{Long Code Completion}}
& \multicolumn{4}{c}{\textbf{Long Code QA}} \\
\cmidrule(lr){2-7} \cmidrule(lr){8-11}
& \multicolumn{3}{c}{\textbf{4$\times$ Constraint}}
& \multicolumn{3}{c}{\textbf{8$\times$ Constraint}}
& \multicolumn{2}{c}{\textbf{4$\times$ Constraint}}
& \multicolumn{2}{c}{\textbf{8$\times$ Constraint}} \\
\cmidrule(lr){2-4} \cmidrule(lr){5-7} \cmidrule(lr){8-9} \cmidrule(lr){10-11}
& $1/\tau$ & ES & EM
& $1/\tau$ & ES & EM
& $1/\tau$ & Acc
& $1/\tau$ & Acc \\
\midrule
Full & 1.0 & 64.65 & 40.5 & 1.0 & 64.65 & 40.5 & 1.0 & 54.05 & 1.0 & 54.05 \\
No Context & $\infty$ & 44.90 & 13.5 & $\infty$ & 44.90 & 13.5 & $\infty$ & 38.39 & $\infty$ & 38.39 \\
\midrule
Selective-Context & 3.27 & 52.48 & 22.0 & 7.49 & 48.67 & 17.0 & 3.69 & 55.36 & 7.32 & 51.79 \\
LLMLingua-2 & 3.32 & 49.47 & 15.5 & 7.89 & 44.74 & 13.0 & 3.57 & 55.36 & 7.68 & 51.33 \\
RAG & 3.29 & 58.97 & 30.5 & 6.60 & 55.82 & 29.0 & 3.06 & 58.04 & 5.87 & 55.86 \\
LongCodeZip & 2.77 & 57.77 & 28.0 & 7.85 & 56.08 & 27.5 & 3.98 & 52.25 & 7.39 & 54.95 \\
\addlinespace[0.2em]
SWE-Pruner & \textbf{5.56} & 58.63 & 31.5 & \textbf{10.92} & 57.58 & 31.0 & 13.95 & 59.46 & 14.84 & 58.71 \\
\rowcolor{gray!15}
\method & 4.37 & \textbf{61.15} & \textbf{35.5} & 9.17 & \textbf{59.21} & \textbf{34.5} & \textbf{16.48} & \textbf{61.26} & \textbf{18.51} & \textbf{60.00} \\
\bottomrule
\end{tabular}%
}
\vspace{-0.15in}
\end{table}

\subsection{Ablation Study}
\label{sec:ablation}
We ablate three design choices in \method{}: the learned objective set, CRF-structured prediction, and AST-aware structural repair. 
The ablations evaluate \method{} as a hybrid neural-symbolic pruner rather than attributing all gains to a single module.
The no-AST variant isolates learned semantic/dependency scoring without structural repair, while the full model combines learned role-aware scoring with AST-aware restoration.
The gap between these variants shows that repair contributes under compression.
At the same time, semantic-only and context-based variants remain below semantic+dependency, indicating that learned objective decomposition also matters.
\Cref{tab:ablation_summary} summarizes the main trends; full precision/recall, compression, token, and constructor results are provided in Appendix \ref{app:ablation_full}.

\noindent\textbf{Objective selection.}\quad
\Cref{tab:ablation_summary} shows that semantic relevance alone is insufficient: the semantic-only model is competitive but consistently below \method{}, indicating that direct relevance does not recover all support lines needed for useful compressed code. 
Context-based objectives are also competitive, but they do not outperform semantic+dependency. 
This supports the distinction between context and dependency: context is often local and optional, whereas dependency support can be non-local and structurally necessary. 
Adding context as a third objective does not improve the final tradeoff, suggesting that context is better handled through retrieval geometry, local windows, and rendering rather than as a universal learned objective.

\begin{wraptable}{r}{0.51\linewidth}
\vspace{-1.1em}
\centering
\scriptsize
\renewcommand{\arraystretch}{1.05}
\setlength{\tabcolsep}{3.2pt}
\caption{
Ablation summary. 
LCQA denotes LongCodeQA; Rnd denotes Conan rounds.
Full tables are in Appendix \ref{app:ablation_full}.
}
\label{tab:ablation_summary}
\begin{tabular}{lcccc}
\toprule
\textbf{Variant} & \textbf{Val F1} & \textbf{LCQA 4$\times$} & \textbf{LCQA 8$\times$} & \textbf{Conan} \\
 & $\uparrow$ & Acc. $\uparrow$ & Acc. $\uparrow$ & Score/Rnd \\
\midrule
Sem only      & 0.777 & 56.25 & 58.55 & 8.13 / 21.6 \\
Sem + Ctx     & 0.788 & 59.26 & 56.48 & 8.26 / 21.3 \\
Ctx + Dep     & 0.786 & 60.36 & 51.82 & 8.32 / 21.3 \\
Sem+Dep+Ctx   & 0.784 & 57.27 & 53.21 & 8.31 / 21.6 \\
Sem+Dep w/o AST & 0.780 & 59.09 & 55.56 & 8.37 / 21.3 \\
\rowcolor{gray!15}
\textbf{\method{}} & \textbf{0.796} & \textbf{61.26} & \textbf{60.00} & \textbf{8.48 / 18.8} \\
\bottomrule
\end{tabular}
\end{wraptable}
\noindent\textbf{Structured prediction and AST repair.}\quad
The detailed validation ablation in \Cref{tab:objective_ablation_full} shows that the FFN variant is high precision but low recall, while CRF structure recovers recall and improves F1. 
Removing AST-aware repair reduces validation F1, lowers LongCodeQA accuracy under aggressive compression, and increases Conan interaction rounds. 
Thus, dependency-aware scoring and AST repair are complementary: the learned model scores evidence and support lines, while AST repair restores high-confidence structural dependencies after thresholding.

\noindent\textbf{Robustness across context constructors.}\quad
We also test \method{} with different upstream constructors; full results are in \Cref{tab:constructor_ablation}. 
\method{} remains effective with both RAG-style retrieval and LongCodeZip-style structural retrieval. 
The strongest LCC result is obtained when LongCodeZip is used in rank-only mode, where it performs coarse function selection and leaves fine-grained line pruning to \method{}. 
Using LongCodeZip in full compression mode before \method{} substantially hurts performance, indicating that coarse retrieval and learned line pruning should be decoupled rather than stacked.

\section{Conclusion}
\label{sec:conclusion}
\vspace{-0.1in}
We presented \method{}, a role-aware code-context pruner for LLM-powered coding agents. 
Instead of treating code relevance as a single scalar keep/prune signal, \method{} separates retention into semantic evidence and dependency support, allowing the pruner to preserve self-contained evidence-support units: task-relevant lines together with the structural context needed to interpret them. 
This formulation is instantiated with rubric-specific CRF heads, query-adaptive gating, AST-derived supervision, and lightweight structural repair. 
Across SWE-Bench Verified, SWE-QA, LCC, and LongCodeQA, \method{} improves the overall score-token tradeoff over reproduced SWE-Pruner. By effectively filtering irrelevant noise while maintaining structural integrity, the pruned context remains highly competitive with, and occasionally surpasses, uncompressed baselines. Ultimately, \method{} delivers substantial efficiency gains in multi-turn tasks while achieving the strongest task performance among compression methods.

\bibliographystyle{unsrt}
\bibliography{references}

\newpage
\appendix

\section{Related Work}
\label{sec:related}

\paragraph{Prompt and code-context compression.}
Token-level pruning methods such as
LLMLingua~\citep{jiang2023llmlingua,jiang2024longllmlingua},
Selective-Context~\citep{li2023compressing},
and gist-token distillation~\citep{mu2023learning}
compress prompts by removing low-saliency tokens
or learning compact representations.
Retrieval-based approaches
select code chunks via embedding similarity
or iterative retrieval~\citep{zhang2023repocoder,guo2022unixcoder,lewis2020retrieval}.
Code-specific methods
including LongCodeZip~\citep{shi2025longcodezip},
DietCode~\citep{zhang2022diet},
hierarchical context pruning~\citep{zhang2024hierarchical},
and SlimCode~\citep{wang2024natural}
address structural concerns
but are primarily evaluated on single-turn proxy tasks
such as code completion~\citep{liu2023repobench,guo2023longcoder}
or code search~\citep{gu2018deep}.
SWE-Pruner~\citep{swepruner}
introduces learned, query-conditioned line-level pruning
with a CRF~\citep{zheng2015conditional} for structured decisions,
demonstrating effectiveness on multi-turn agent benchmarks.
Most prior compression methods ultimately make a single keep/drop decision per token, line, or chunk, without explicitly separating direct semantic evidence from dependency support.
Recent work on rubric-based reward modeling
decomposes evaluation into interpretable dimensions
for preference learning~\citep{xie2025auto,wang2024interpretable}.
Inspired by this perspective,
\method decomposes code relevance
into multiple latent rubric dimensions,
each with its own CRF transition dynamics,
enabling the model to retain lines that are structurally necessary even when semantically inconspicuous.

\paragraph{Agent context management.}
Coding agents~\citep{yang2024sweagent,wang2024openhands,xia2024agentless}
face severe context-window pressure
in multi-turn workflows,
with documented performance degradation
on long inputs~\citep{liu2023lost,laban2025llms}.
Recent trajectory management methods
employ LLM summarization~\citep{lu2025supo,cursor,claude_code},
context folding~\citep{sun2025contextfolding},
observation masking~\citep{lindenbauer2025complexity},
or proactive compression policies~\citep{kang2025acon,ye2025agentfold,wan2025compass}
to manage prior interaction histories.
These approaches are orthogonal to ours:
they compress the agent's \emph{past} trajectories,
whereas \method compresses the environment's
\emph{current} observations (file content)
at the agent--environment boundary,
and can be combined with history managers seamlessly.

\section{Limitations and Broader Impact}
\label{app:limitations}

\paragraph{Limitations.}
LaMR is designed for code-observation pruning at the agent--environment boundary, where the agent receives repository files, retrieved snippets, and tool outputs. 
This scope is complementary to trajectory-level memory or history-compression methods, which manage the agent's previous tool calls, observations, and reasoning traces. 
In practice, the two directions can be combined: LaMR reduces the size of incoming code observations, while history-management methods can summarize or organize past interaction context.

Our experiments follow the Python-centered setting of the evaluated coding-agent benchmarks. 
The semantic/dependency formulation itself is not Python-specific, but applying it to additional languages requires the corresponding parser and dependency extractor. 
This is a natural engineering extension of the same framework, since the core model only requires rubric labels and does not depend on Python syntax at inference time except for the lightweight repair step.

LaMR uses semantic relevance and dependency support as the deployed learned rubrics. 
The labeling pipeline can construct additional dimensions such as context, and we study these variants in the ablations. 
In our experiments, semantic+dependency gives the most reliable tradeoff, while context is better handled through retrieval geometry, local windows, and rendering choices. 
Future work could explore task-adaptive rubric selection for settings where different interfaces require different forms of surrounding context.

Finally, LaMR is intended as an efficiency layer for coding agents. 
It reduces unnecessary code context before the downstream model reasons over the task, while preserving fallback behavior: the agent can still request additional files or issue new search commands when more information is needed. 
This is why we evaluate not only compression, but also total trajectory tokens and interaction rounds.

\paragraph{Broader Impact.}
LaMR reduces the amount of irrelevant repository context sent to large backbone models, which can lower latency, API cost, and unnecessary long-context computation. 
This may make repository-level coding agents more accessible to users with limited compute or budget. 
It may also reduce unnecessary exposure of repository code to external model endpoints by filtering irrelevant observations before inference.

LaMR should not be used as a security filter, a replacement for testing, or a substitute for human review in high-stakes software settings. 
Its intended use is efficiency-oriented observation compression: helping agents read less irrelevant code while preserving semantic evidence and dependency support needed for reasoning.

\section{Viterbi Decoding in the CRF Layer}
\label{app:viterbi}

At inference time, the final CRF layer recovers
the globally optimal label sequence
$\hat{\mathbf{y}} = \arg\max_{\mathbf{y}} S(\mathbf{x}, \mathbf{y})$
via the Viterbi algorithm,
a dynamic programming procedure
that runs in $O(n \cdot |\mathcal{Y}|^2)$ time
(linear in sequence length for our binary label space $|\mathcal{Y}|=2$).

\noindent\textbf{Forward pass.}\quad
Let $\mathbf{e}_t^{\text{fused}} \in \mathbb{R}^2$
denote the gated emission at position $t$,
$\mathbf{T}^{\text{f}} \in \mathbb{R}^{2 \times 2}$ the transition matrix,
$\boldsymbol{\pi} \in \mathbb{R}^2$ the start potentials,
and $\boldsymbol{\omega} \in \mathbb{R}^2$ the end potentials.
We maintain a score vector
$\mathbf{v}_t \in \mathbb{R}^2$
storing the maximum path score
ending in each label at position $t$,
along with backpointers $\mathbf{b}_t \in \{0,1\}^2$
recording which previous label achieved that maximum:
\begin{align}
  \mathbf{v}_1 &= \boldsymbol{\pi} + \mathbf{e}_1^{\text{fused}},
    \label{eq:viterbi_init} \\[3pt]
  v_{t,j} &= \max_{i \in \{0,1\}}
    \bigl( v_{t-1,i} + T^{\text{f}}_{j,i} \bigr)
    + e_{t,j}^{\text{fused}},
    \quad t = 2, \dots, n,
    \label{eq:viterbi_recurse} \\[3pt]
  b_{t,j} &= \arg\max_{i \in \{0,1\}}
    \bigl( v_{t-1,i} + T^{\text{f}}_{j,i} \bigr).
    \label{eq:viterbi_bp}
\end{align}

\noindent\textbf{Termination and backtracking.}\quad
After processing all $n$ positions,
the optimal final label is
$\hat{y}_n = \arg\max_j (v_{n,j} + \omega_j)$.
The full sequence is recovered by
following backpointers in reverse:
$\hat{y}_{t-1} = b_{t, \hat{y}_t}$
for $t = n, \dots, 2$.
The output $\hat{\mathbf{y}} \in \{0, 1\}^n$
is a binary keep/prune decision for each token,
where 0 denotes \texttt{prune} and 1 denotes \texttt{keep}.

Unlike independent per-token classification,
Viterbi decoding considers
the transition preferences between adjacent labels,
producing coherent pruning patterns
that respect code block boundaries.
For example, a high $T^{\text{f}}_{1,1}$ (keep$\to$keep)
score encourages contiguous retained regions,
avoiding isolated single-token keep decisions
that would fragment the output.

\section{Experimental Details}
\label{app:experimental_details}

\subsection{Benchmark Descriptions}
\label{app:benchmarks}

\noindent\textbf{SWE-Bench Verified}~\citep{jimenez2024swe}
is a curated subset of 500 real-world GitHub issues
drawn from popular Python repositories.
Each instance requires an agent
to navigate the repository, understand the issue,
and generate a patch that resolves the problem.
We integrate \method into the
Mini SWE Agent~\citep{yang2024sweagent} framework,
which provides a ReAct-style agent
with terminal access for file reading,
code search, and patch submission.
Success is measured by whether
the generated patch passes the repository's test suite.

\noindent\textbf{SWE-QA}~\citep{peng2025swe}
evaluates repository-level question answering
across three Python repositories.
Given a natural-language question
about the codebase,
the agent must locate and synthesize
information from multiple files.
We use the OpenHands~\citep{wang2024openhands} framework
for this benchmark.
Answers are scored by an LLM-as-a-Judge
on a scale of 0--10.

\noindent\textbf{Long Code Completion (LCC)}~\citep{guo2023longcoder}
contains 500 Python code completion examples
with contexts exceeding 5K tokens.
The model must predict the next code segment
given a long preceding context.
Performance is measured by
Edit Similarity (ES) and Exact Match (EM).

\noindent\textbf{Long Code QA (LongCodeQA)}~\citep{rando2025longcodebench}
tests question answering
over code contexts up to 1M tokens.
Given a question about a large codebase,
the model must select or generate the correct answer.
Performance is measured by Accuracy.

Both single-turn benchmarks are evaluated
under 4$\times$ and 8$\times$ compression constraints,
where all methods receive the same target compression ratio.

\subsection{Baseline Descriptions}
\label{app:baselines}

We compare against the following methods,
all configured under identical compression constraints:

\begin{itemize}[leftmargin=1.2em,itemsep=2pt]
  \item \textbf{Full Context}:
    The agent receives the complete, uncompressed context.
    Serves as the upper bound for task performance.

  \item \textbf{No Context}:
    The agent receives no file content.
    Serves as the lower bound.

  \item \textbf{LLMLingua-2}~\citep{pan2024llmlingua}:
    Token-level pruning via a learned perplexity model.
    Compresses prompts by removing tokens
    with the lowest perplexity contribution.

  \item \textbf{Selective-Context}~\citep{li2023compressing}:
    Token-level pruning based on self-information.
    Removes tokens with low information content
    as estimated by a language model.

  \item \textbf{RAG}~\citep{guo2022unixcoder}:
    Retrieval-augmented generation
    using UniXCoder embeddings.
    Retrieves the most relevant code chunks
    via embedding similarity to the query.

  \item \textbf{LongCodeZip}~\citep{shi2025longcodezip}:
    Program-structure-aware compression
    that leverages AST information
    to identify and remove redundant code elements.

  \item \textbf{LLM Summarize} (multi-turn only):
    Generates abstractive summaries
    of file contents using the agent's backbone LLM.
    Incurs additional inference cost.

  \item \textbf{SWE-Pruner}~\citep{swepruner}:
    Single-objective CRF-based pruner
    with the Qwen3-Reranker-0.6B backbone.
    This is the direct baseline
    that \method extends
    with latent multi-rubric decomposition and expert gating.
\end{itemize}

We do not compare with agent history compression
methods such as AgentFold~\citep{ye2025agentfold}
and ACON~\citep{kang2025acon},
as they address the complementary problem
of compressing prior interaction trajectories
rather than environment observations (file content).

\subsection{Metric Definitions}
\label{app:metrics}

\noindent\textbf{Task performance metrics:}

\begin{itemize}[leftmargin=1.2em,itemsep=2pt]
  \item \textbf{Resolve Rate} (SWE-Bench Verified):
    Fraction of issues for which the generated patch
    passes all repository test cases.

  \item \textbf{LLM-as-a-Judge Score} (SWE-QA):
    Average score (0--10) assigned by an LLM judge
    evaluating the correctness and completeness
    of the agent's answer.

  \item \textbf{Edit Similarity (ES)} (LCC):
    Character-level edit similarity
    between the predicted completion
    and the ground-truth continuation,
    computed as $1 - \text{edit\_distance}(y, \hat{y}) / \max(|y|, |\hat{y}|)$.

  \item \textbf{Exact Match (EM)} (LCC):
    Fraction of examples where the predicted completion
    exactly matches the ground truth.

  \item \textbf{Accuracy} (LongCodeQA):
    Fraction of questions answered correctly.
\end{itemize}

\noindent\textbf{Compression efficiency metrics:}

\begin{itemize}[leftmargin=1.2em,itemsep=2pt]
  \item \textbf{Compression Ratio} ($\rho$):
    $\rho = |C_\text{original}| / |C_\text{compressed}|$,
    where $|\cdot|$ denotes token count.
    Higher values indicate more aggressive compression.

  \item \textbf{Token Consumption}:
    Total number of input tokens
    consumed across all interactions
    (multi-turn) or per instance (single-turn).

  \item \textbf{Rounds} (multi-turn only):
    Number of agent interaction rounds
    required to complete the task.
    Fewer rounds indicate more decisive reasoning.

\end{itemize}

\subsection{Hyperparameters}
\label{app:hyperparams}

\Cref{tab:hyperparams} lists all hyperparameters
used for training \method.

\begin{table}[h]
\centering
\small
\caption{Training hyperparameters for \method.}
\label{tab:hyperparams}
\small
\begin{tabular}{@{}ll@{}}
\toprule
\textbf{Hyperparameter} & \textbf{Value} \\
\midrule
Backbone & Qwen3-Reranker-0.6B ~\citep{zhang2025qwen3} \\
Fine-tuned layers & Top 2 transformer layers \\
Fusion layers ($N_\text{fuse}$) & 1 \\
Number of objectives ($K$) & 2 \\
Objective names & semantic, dependency \\ 
Rubric weights ($w_k$) & 1.0, 0.7 \\
Gating type & Softmax \\
Use final CRF & Yes \\
$\lambda_r$ (rubric weight) & 0.6 \\
$\lambda_g$ (gate entropy weight) & 0.002 \\
Learning rate & $1 \times 10^{-4}$ \\
Batch size (per GPU) & 16 \\
Dropout & 0.4 \\
Loss function & CRF-NLL + MSE score loss + gate regularization \\
GPUs & 8 \\
Optimizer & AdamW \\

\bottomrule
\end{tabular}
\end{table}

\subsection{Full Training Objective}
\label{app:full_loss}

The complete training objective extends the core loss
(\cref{eq:total_loss}) with two auxiliary terms:
\begin{equation}
  \mathcal{L}_{\text{full}}
    = (1 - \lambda_s)\bigl[
        (1 - \lambda_r)\,\mathcal{L}_{\text{main}}
        + \lambda_r\,\mathcal{L}_{\text{rubric}}
      \bigr]
    + \lambda_s\,\mathcal{L}_{\text{score}}
    + \lambda_g\,\mathcal{L}_{\text{gate}}
  \label{eq:full_loss}
\end{equation}

\noindent\textbf{Document-level relevance loss}
($\mathcal{L}_{\text{score}}$, $\lambda_s\!=\!0.05$).
Following SWE-Pruner, we add a document-level scoring head
that predicts overall relevance
from the last token's hidden state
projected onto the vocabulary embeddings.
The loss is the MSE between the predicted probability
(softmax over the \texttt{yes}/\texttt{no} logits)
and the ground-truth relevance score.
This auxiliary task encourages the backbone
to maintain a global view of document relevance
alongside the token-level pruning decisions.

\noindent\textbf{Gate-balance regularization}
($\mathcal{L}_{\mathrm{gate}}$, $\lambda_g=0.002$).
To prevent the gating network from collapsing to a single dominant rubric, we add a gate-balance loss $\mathcal{L}_{\mathrm{gate}}$ as defined in \Cref{par:gating}. 
This term encourages high entropy in the gate distribution averaged over valid positions, promoting balanced use of the semantic and dependency rubrics during training.

\section{Proofs}
\label{app:theory}

\begin{proof}[Proof of \cref{prop:conditional}]
If $e\in S$ and $d\notin S$, adding $d$ removes the missing-support penalty $M$ and pays one line cost $\lambda$:
\[
\Delta(d\mid e\in S)=M-\lambda>0.
\]
If $e\notin S$, adding $d$ provides no evidence benefit and only pays cost:
\[
\Delta(d\mid e\notin S)=-\lambda<0.
\]
The value of a support line thus depends on whether the evidence it supports is retained.
A single unconditional scalar score cannot represent both cases.
\end{proof}

\begin{proof}[Proof of \cref{prop:closure}]
Assume for contradiction that an optimal $S^\star$ is not dependency-closed.
Then there exists $(i,j)\in E$ with $i\in S^\star$ and $j\notin S^\star$,
so $S^\star$ pays at least one penalty $M$.
Consider $S' = S^\star \cup \{j\}$.
Adding $j$ removes at least one penalty of size $M$.
The cost-adjusted contribution is bounded by $(r_j - \lambda c_j)_+$.
By assumption $M > (r_j - \lambda c_j)_+$,
so $U(S') > U(S^\star)$,
contradicting optimality.
\end{proof}

\section{Full Ablation Results}
\label{app:ablation_full}

This appendix expands the ablation summary in \Cref{sec:ablation}. 
We report full results for four analyses: 
(i) validation objective/architecture ablation, 
(ii) downstream LongCodeQA ablation, 
(iii) SWE-QA Conan ablation, and 
(iv) robustness to upstream context constructors. 
All variants use the same Qwen3-Reranker backbone and training pipeline unless otherwise specified. 
For validation results, the FFN and CRF losses are on different numerical scales, so we compare primarily accuracy, precision, recall, and F1.

To make the ablation logic explicit, \Cref{tab:claim_evidence} maps each methodological claim to the experiment that supports it. 
This table is intended as a guide to the detailed ablation results below, not as an additional evaluation metric.

\begin{table}[t]
\centering
\small
\renewcommand{\arraystretch}{1.12}
\setlength{\tabcolsep}{4pt}
\caption{
Claim-evidence map for the ablation study. 
The table summarizes which experiments support each design choice; full numerical results are reported in the following appendix tables.
}
\label{tab:claim_evidence}
\begin{tabular}{p{0.32\linewidth} p{0.58\linewidth}}
\toprule
\textbf{Claim} & \textbf{Supporting evidence} \\
\midrule

Semantic relevance alone is insufficient. 
&Semantic-only underperforms the full model on validation F1 
(0.777 vs.\ 0.796), LongCodeQA, and Conan. 
This indicates that direct relevance alone misses support lines needed for useful compressed code. \\

Dependency is the best learned complement to semantic relevance among tested objectives. 
& Semantic+dependency with AST outperforms semantic+context, context+dependency, and semantic+dependency+context on validation F1 and downstream LongCodeQA. \\

Context is useful but unstable as a universal learned objective. 
& Context variants improve recall or compression in some settings, but degrade under 8$\times$ LongCodeQA and underperform the full model on Conan. 
This supports handling context through retrieval/rendering rather than as an always-on objective. \\

CRF structure matters. 
& In the three-objective setting, CRF improves over FFN in recall and F1 
(.784 vs.\ .755), showing that structured sequence modeling is important for pruning. \\

AST-aware structure matters. 
& Semantic+dependency w/ AST beats w/o AST on validation F1 
(.796 vs.\ .780), LongCodeQA 8$\times$ accuracy 
(60.00 vs.\ 55.56), and Conan score/round tradeoff 
(8.48/18.8 vs.\ 8.37/21.3). \\

LaMR improves the score-token tradeoff. 
& Main results show strong single-turn gains on LCC and LongCodeQA, and lower token/round cost than reproduced SWE-Pruner in multi-turn agent settings. \\

Coarse retrieval and learned line pruning should be decoupled. 
& LongCodeZip rank-only + LaMR achieves the strongest LCC 8$\times$ result, while LongCodeZip full compression before LaMR substantially hurts quality. \\

\bottomrule
\end{tabular}
\end{table}
\subsection{Objective and Architecture Ablation}
\label{app:objective_ablation}

\begin{table}[h]
\centering
\small
\renewcommand{\arraystretch}{1.12}
\setlength{\tabcolsep}{4pt}
\caption{
Objective and architecture ablation on validation data. 
The deployed semantic+dependency model with AST-aware repair achieves the best F1 and accuracy. 
Context improves recall in some configurations but does not improve the final tradeoff. 
The FFN three-objective model is high precision but low recall, while CRF structure recovers recall and improves F1.
}
\label{tab:objective_ablation_full}
\begin{tabular}{l l c c c c c c}
\toprule
\textbf{Objective Configuration} & \textbf{Head} & \textbf{Epoch} 
& \textbf{C-Loss} $\downarrow$ 
& \textbf{Acc.} $\uparrow$ 
& \textbf{Prec.} $\uparrow$ 
& \textbf{Rec.} $\uparrow$ 
& \textbf{F1} $\uparrow$ \\
\midrule
Semantic only 
& CRF & 3 & 0.3399 & 0.8052 & 0.7682 & 0.7861 & 0.7770 \\
Semantic + Context 
& CRF & 3 & 0.3715 & 0.8144 & 0.7789 & 0.7963 & 0.7875 \\
Dependency + Context 
& CRF & 3 & 0.3931 & 0.8083 & 0.7694 & \textbf{0.8033} & 0.7860 \\
Semantic + Dependency + Context 
& FFN & 3 & 0.0532 & 0.8029 & \textbf{0.8297} & 0.6921 & 0.7547 \\
Semantic + Dependency + Context 
& CRF & 3 & 0.3720 & 0.8115 & 0.7863 & 0.7826 & 0.7844 \\
Semantic + Dependency w/o AST 
& CRF & 3 & 0.3669 & 0.8252 & 0.7929 & 0.7683 & 0.7804 \\
\rowcolor{gray!15}
Semantic + Dependency w/ AST 
& CRF & 3 & 0.3620 & \textbf{0.8253} & 0.8154 & 0.7772 & \textbf{0.7958} \\
\bottomrule
\end{tabular}
\end{table}

Semantic-only pruning achieves reasonable precision but remains below \method{} in F1, indicating that direct relevance alone does not recover all useful support lines. 
Context-based variants improve recall in some settings but do not outperform semantic+dependency. 
The three-objective setting also does not improve over the two-objective deployed model, supporting the choice of semantic and dependency as the learned rubrics. 
The FFN variant is highly precise but recall-limited, whereas CRF structure improves recall and overall F1.

\subsection{LongCodeQA Downstream Ablation}
\label{app:longcodeqa_ablation}

\begin{table}[h]
\centering
\small
\renewcommand{\arraystretch}{1.12}
\setlength{\tabcolsep}{5pt}
\caption{
LongCodeQA downstream ablation under 4$\times$ and 8$\times$ compression constraints. 
\method{} achieves the best accuracy and compression in both settings. 
Context-based objectives are competitive at 4$\times$ but unstable under 8$\times$ compression, while removing AST structure substantially reduces 8$\times$ accuracy.
}
\label{tab:longcodeqa_ablation_full}
\begin{tabular}{l c c c c}
\toprule
\multirow{2}{*}{\textbf{Pruner Variant}} 
& \multicolumn{2}{c}{\textbf{4$\times$ Constraint}} 
& \multicolumn{2}{c}{\textbf{8$\times$ Constraint}} \\
\cmidrule(lr){2-3} \cmidrule(lr){4-5}
& $1/\tau$ $\uparrow$ & Acc. $\uparrow$ 
& $1/\tau$ $\uparrow$ & Acc. $\uparrow$ \\
\midrule
Semantic only 
& 15.92 & 56.25 & 17.27 & 58.55 \\
Semantic + Context 
& 12.13 & 59.26 & 16.75 & 56.48 \\
Context + Dependency 
& 11.73 & 60.36 & 15.06 & 51.82 \\
Semantic + Dependency w/o AST 
& 12.99 & 59.09 & 18.00 & 55.56 \\
Semantic + Dependency + Context 
& 11.87 & 57.27 & 15.00 & 53.21 \\
\rowcolor{gray!15}
\method{} 
& \textbf{16.48} & \textbf{61.26} & \textbf{18.51} & \textbf{60.00} \\
\bottomrule
\end{tabular}
\end{table}

The LongCodeQA ablation shows that the validation trends transfer to a downstream long-code QA task. 
Context-based objectives are not consistently robust under stronger compression. 
Removing AST-aware structure especially hurts under the 8$\times$ constraint, suggesting that structural support becomes more important as the token budget tightens.

\subsection{SWE-QA Conan Ablation}
\label{app:conan_ablation}

\begin{table}[h]
\centering
\small
\renewcommand{\arraystretch}{1.12}
\setlength{\tabcolsep}{4.5pt}
\caption{
SWE-QA Conan ablation with Claude Sonnet 4.5. 
\method{} achieves the highest answer score and the fewest interaction rounds while reducing token usage by 20.5\% relative to the unpruned baseline. 
Context-heavy variants compress more aggressively but reduce answer quality, while removing AST structure lowers score and increases rounds compared with \method{}.
}
\label{tab:conan_ablation_full}
\begin{tabular}{l c c c c c}
\toprule
\multirow{2}{*}{\textbf{Method}} 
& \multicolumn{2}{c}{\textbf{Avg Score}} 
& \textbf{Rounds} 
& \multicolumn{2}{c}{\textbf{Tokens (K)}} \\
\cmidrule(lr){2-3} \cmidrule(lr){4-4} \cmidrule(lr){5-6}
& Value & $\Delta$ 
& Value 
& Value & $\Delta$ \\
\midrule
\multicolumn{6}{c}{\textit{Conan, Claude Sonnet 4.5}} \\
\midrule
Unpruned 
& 8.35 & -- & 23.9 & 672.9 & -- \\
+ Semantic only 
& 8.13 & $\downarrow$0.22 & 21.6 & 515.2 & $\downarrow$23.4\% \\
+ Semantic + Context 
& 8.26 & $\downarrow$0.09 & 21.3 & 480.6 & $\downarrow$28.6\% \\
+ Context + Dependency 
& 8.32 & $\downarrow$0.03 & 21.3 & 519.4 & $\downarrow$23.5\% \\
+ Semantic + Dependency w/o AST 
& 8.37 & $\uparrow$0.02 & 21.3 & 494.4 & $\downarrow$26.5\% \\
+ Semantic + Dependency + Context 
& 8.31 & $\downarrow$0.04 & 21.6 & 514.9 & $\downarrow$23.5\% \\
\rowcolor{gray!15}
+ \method{} 
& \textbf{8.48} & $\uparrow$0.13 & \textbf{18.8} & 535.2 & $\downarrow$20.5\% \\
\bottomrule
\end{tabular}
\end{table}

This ablation highlights the difference between aggressive compression and useful compression. 
Semantic+context uses fewer tokens than \method{}, but lowers answer quality. 
Semantic+dependency without AST also uses fewer tokens than \method{}, but requires more rounds and obtains a lower score. 
\method{} retains slightly more context than the most aggressive variants, but the retained context is more useful to the agent, leading to higher score and fewer rounds.

\subsection{Robustness Across Context Constructors}
\label{app:constructor_ablation}

\begin{table}[h]
\centering
\small
\renewcommand{\arraystretch}{1.12}
\setlength{\tabcolsep}{5pt}
\caption{
Robustness across upstream context constructors on LCC 8$\times$. 
\method{} is effective with both RAG and LongCodeZip retrieval. 
The best result is obtained when LongCodeZip is used in rank-only mode, which performs coarse function selection and leaves fine-grained line pruning to \method{}. 
Using LongCodeZip in full compression mode before \method{} substantially hurts performance, showing that stacking two line compressors is harmful.
}
\label{tab:constructor_ablation}
\begin{tabular}{l l c c c}
\toprule
\textbf{Constructor} & \textbf{Pruner} 
& EM $\uparrow$ & ES $\uparrow$ & $1/\tau$ $\uparrow$ \\
\midrule
LCZP (rank-only) & \method{} 
& \textbf{34.5} & \textbf{59.21} & 9.17 \\
LCZP (rank-only) & SWE-Pruner 
& 13.0 & 43.99 & 10.67 \\
\midrule
LCZP (full) & \method{} 
& 24.0 & 52.39 & 10.62 \\
\midrule
RAG ($k=5$) & \method{} 
& 31.0 & 57.27 & 7.53 \\
RAG ($k=5$, wide) & \method{} 
& 31.0 & 58.26 & 6.52 \\
\bottomrule
\end{tabular}
\end{table}

These results support a coarse-to-fine interpretation of \method{}. 
The upstream constructor should preserve candidate recall, while \method{} performs fine-grained semantic/dependency line pruning. 
When the upstream constructor also performs line-level compression, useful evidence may be removed before \method{} can score it.

\paragraph{Summary of ablations.}
Together, the validation, LongCodeQA, and SWE-QA Conan ablations support the same conclusion. 
Semantic relevance identifies direct evidence, but dependency supervision is needed to preserve structural support. 
Context is useful but unstable as a universal learned objective; it is better handled by retrieval, local windows, and rendering. 
CRF-structured prediction improves recall over independent FFN classification, and AST-aware repair improves both validation and downstream performance, especially under tighter compression.

\end{document}